\DeclareMathOperator*{\argmin}{arg\,min}
\DeclareMathOperator*{\trace}{Tr}
\newcommand{\tp}{^{\mathrm{T}}}
\newcommand{\invtp}{^{-\mathrm{T}}}
\newcommand{\grad}{\nabla}
\newcommand{\hessian}{\nabla^2}
\newcommand{\rbrac}[1]{({#1})}
\newcommand{\rBrac}[1]{\left({#1}\right)}
\newcommand{\cBrac}[1]{\left\{{#1}\right\}}
\newcommand{\norm}[1]{\Vert{#1}\Vert}
\newcommand{\Norm}[1]{\left\Vert{#1}\right\Vert}
\newcommand{\abs}[1]{\vert{#1}\vert}
\newtheorem{assumption}{Assumption}
\newtheorem{lemma}{Lemma}
\newtheorem{theorem}{Theorem}
\title{\textbf{Detailed Proofs of \\ \textit{Alternating Minimization Based Trajectory Generation for Quadrotor Aggressive Flight}}}
\author{Zhepei Wang, Xin Zhou, Chao Xu and Fei Gao}
\date{}
\begin{document}
    \maketitle

\begin{abstract}
    This technical report provides detailed theoretical analysis of the algorithm used in \textit{Alternating Minimization Based Trajectory Generation for Quadrotor Aggressive Flight}. An assumption is provided to ensure that settings for the objective function are meaningful. What's more, we explore the structure of the optimization problem and analyze the global/local convergence rate of the employed algorithm.
\end{abstract}

\section{Preliminaries}
Piece-wise polynomial representation of trajectory is adopted. Any segment of the trajectory can be denoted as an $N$-order polynomial
\begin{equation}
p(t)=\mathbf{c}\tp \beta(t), t\in[0, T]
\end{equation}
where $\mathbf{c}\in\mathbb{R}^{(N+1)\times{3}}$ is the coefficient matrix, $T$ is the duration and
\begin{equation}
\beta(t)=\rBrac{1, t, t^2, \cdots, t^N}\tp
\end{equation}
 is a basis function. It is worth noting that $N$ should be an odd number hereafter, which makes the mapping bijective between the coefficient matrix and its boundary condition.

Consider derivatives of $p(t)$ up to $(N-1)/2$ order
\begin{equation}
\mathbf{d}(t)=\rBrac{p(t), \dot{p}(t), \cdots, p^{\rBrac{\frac{N-1}{2}}}(t)}\tp,
\end{equation}
we have
\begin{equation}
\mathbf{d}(t)=\mathbf{B}(t)\mathbf{c}
\end{equation}
where
\begin{equation}
\mathbf{B}(t)=\rBrac{\beta(t), \dot{\beta}(t), \cdots, \beta^{\rBrac{\frac{N-1}{2}}}(t)}\tp.
\end{equation}
We denote $\mathbf{d}_{start}$ and $\mathbf{d}_{end}$ by $\mathbf{d}(0)$ and $\mathbf{d}(T)$, respectively. The boundary condition is described by tuple $\rBrac{\mathbf{d}\tp_{start}, \mathbf{d}\tp_{end}}\tp$. The following mapping holds:
\begin{equation}
\label{eq:RepresentationMapping}
\rBrac{\mathbf{d}\tp_{start}, \mathbf{d}\tp_{end}}\tp=\mathbf{A}(T)\mathbf{c}
\end{equation}
where
\begin{equation}
\mathbf{A}(T)=\rBrac{\mathbf{B}\tp(0),\mathbf{B}\tp(T)}\tp
\end{equation}
is the mapping matrix. Since $N$ is an odd number, it is easy to know that $\mathbf{A}(T)$ is a non-singular square matrix. In other words, the mapping in \ref{eq:RepresentationMapping} is bijective. Therefore, any segment of a trajectory can be equivalently expressed by tuple $\rBrac{\mathbf{d}_{start}, \mathbf{d}_{end}, T}$ or tuple $\rBrac{\mathbf{c}, T}$.

Consequently, we consider an $M$-segment trajectory $\mathbf{P}$ parametrized by time allocation $\mathbf{T}=\rBrac{T_1, T_2, \cdots, T_M}\tp$ as well as boundary conditions $\mathbf{D}=\rBrac{d\tp_1, d\tp_2, \cdots, d\tp_{M+1}}\tp$ of all segments. The trajectory is defined by
\begin{equation}
\mathbf{P}(t):=\mathbf{d}_m\tp\mathbf{A}(T_m)\invtp\beta( t-\sum_{i=1}^{m-1}{T_i})
\end{equation}
where $t$ lies in the $m$-th segment and $\mathbf{d}_m=\rBrac{d\tp_{m}, d\tp_{m+1}}\tp$ is a boundary condition of the $m$-th segment. Normally, some entries in $\mathbf{D}$ are fixed while the others are to be optimized. We split $\mathbf{D}$ into two parts, the fixed part $\mathbf{D}_F$ which is viewed as constant, and the free part $\mathbf{D}_P$ which is to be optimized.
Then, the whole trajectory can be fully determined by
\begin{equation}
\mathbf{P}=\mathbf{\Phi}(\mathbf{D}_P, \mathbf{T}).
\end{equation}

\section{Optimization Objective}
The following time regularized quadratic objective function is used:
\begin{equation}
\label{eq:IntegralFormObjective}
J(\mathbf{P})=\int_{0}^{\sum_{m=1}^{M}{T_m}}\rBrac{{\rho + \sum_{i=D_{min}}^{D_{max}}{w_i\Norm{\mathbf{P}^{(i)}(t)}^2}}}\mathrm{d}t
\end{equation}
where $D_{min}$ and $D_{max}$ are the lowest and the highest order of derivative to be penalized respectively, $w_i$ is the weight of the $i$-order derivative and $\rho$ is the weight of time regularization. When $D_{max} > (N-1)/2$, some derivatives on the boundary of each segment may not exist, hence we sum up objectives on all segments instead, which have the form
\begin{equation}
\label{eq:KthSegmentObjective}
J_m(\mathbf{d}_m, T_m):=\rho T_m+\trace\cBrac{\mathbf{d}_m\tp\mathbf{A}(T_m)\invtp \mathbf{Q}(T_m)\mathbf{A}(T_m)^{-1}\mathbf{d}_m}
\end{equation}
for the $m$-th segment, where $\mathbf{Q}(T_m)$ is a symmetric matrix \cite{Bry2015AggressiveFO} consisting of high powers of $T_m$, and $\trace\cBrac{\cdot}$ is trace operation. The overall objective $J(\mathbf{D}_P, \mathbf{T}):=J(\mathbf{\Phi}(\mathbf{D}_P, \mathbf{T}))$ is formulated as
\begin{equation}
\label{eq:ObjectiveStructure}
J(\mathbf{D}_P, \mathbf{T})=\rho\Norm{\mathbf{T}}_1+\trace\cBrac{\begin{pmatrix}\mathbf{D}_F\\\mathbf{D}_P\end{pmatrix}\tp\mathbf{C}\tp\mathbf{H}(\mathbf{T})\mathbf{C}\begin{pmatrix}\mathbf{D}_F\\\mathbf{D}_P\end{pmatrix}}
\end{equation}
\begin{equation}
\mathbf{H}(\mathbf{T})=\bigoplus_{m=1}^{M}{\mathbf{A}(T_m)\invtp\mathbf{Q}(T_m)\mathbf{A}(T_m)^{-1}}
\end{equation}
where $\mathbf{H}(\mathbf{T})$ is the direct sum of its $M$ diagonal blocks, and $\mathbf{C}$ is a permutation matrix.

In Eq.~\ref{eq:ObjectiveStructure}, $N, M, D_{min}, D_{max}, \rho, w_i, \mathbf{D}_F$ and $\mathbf{C}$ are all parameters that directly determine the structure of $J(\mathbf{D}_P, \mathbf{T})$. It is important to know that not all settings for $J$ are legal. Instead of restricting those parameters, we make the following assumption on the objective function such that the setting is meaningful.
\begin{assumption}
    \label{asm:LegalityCondition}
    For any finite $\alpha$, the corresponding $\alpha$-sublevel set of $J$
    \begin{equation}
    \mathit{L}_{\alpha}^{-}(J):=\cBrac{(\mathbf{D}_P, \mathbf{T})~\Big|~J(\mathbf{D}_P, \mathbf{T})\leq{\alpha}}
    \end{equation}
    is bounded and satisfies
    \begin{equation}
    \label{eq:StrictPositiveness}
    \mathit{L}_{\alpha}^{-}(J)\subset\cBrac{(\mathbf{D}_P, \mathbf{T})~\Big|~\mathbf{T}\in\mathbb{R}_+^M}.
    \end{equation}
\end{assumption}
Intuitively, Assumption \ref{asm:LegalityCondition} forbids the objective from taking meaningful value when decision variables are extremely large or any duration is extremely small. For example, consecutive repeating waypoints with identical boundary conditions fixed in $\mathbf{D}_F$ are illegal, because  the optimal duration on corresponding segment becomes $0$ which violates condition (\ref{eq:StrictPositiveness}). In other words, the segment should not exist if the objective is to be minimized. Another example is that $\rho\leq{0}$ is also illegal. Non-positive weight on total duration means that the objective can be sufficiently low when duration on each segment is large enough. In such a case, the boundness condition is violated.

\section{Unconstrained Optimization Algorithm}
\begin{algorithm}
    \caption{Unconstrained Spatial-Temporal AM}
    \label{alg:UnconstrainedSpatialTemporalAM}
    \KwIn{$\mathbf{D}_P^0, K\in\mathbb{Z}_+, \delta>0$}
    \KwOut{$\mathbf{D}_P^*, \mathbf{T}^*$}
    \Begin
    {
        $\mathbf{T}^0 \leftarrow \argmin_{\mathbf{T}}{J(\mathbf{D}_P^0, \mathbf{T})}$\;
        $J_l \leftarrow J(\mathbf{D}_P^0, \mathbf{T}^0), k \leftarrow 0$\;
        \While{$k<K$}
        {
            $\mathbf{D}_P^{k+1} \leftarrow \argmin_{\mathbf{D}_P}{J(\mathbf{D}_P, \mathbf{T}^{k})}$\;
            $\mathbf{T}^{k+1} \leftarrow \argmin_{\mathbf{T}}{J(\mathbf{D}_P^{k+1}, \mathbf{T})}$\;
            $J_{c} \leftarrow J(\mathbf{D}_P^{k+1}, \mathbf{T}^{k+1})$\;
            \If{$\abs{J_l-J_c}<\delta$}{\textbf{break}}
            $J_l \leftarrow J_c, k \leftarrow k+1$\;
        }
        $\mathbf{D}_P^* \leftarrow \mathbf{D}_P^{k}, \mathbf{T}^* \leftarrow \mathbf{T}^{k}$\;

        \Return{$\mathbf{D}_P^*, \mathbf{T}^*$};
    }
\end{algorithm}
To optimize Eq.~\ref{eq:ObjectiveStructure}, Algorithm.~\ref{alg:UnconstrainedSpatialTemporalAM} is proposed.
Initially, $\mathbf{T}^{0}$ is solved for any provided $\mathbf{D}_P^0$. After that, the minimization of the objective function is done through a two-phase process, in which only one of $\mathbf{D}_P$ and $\mathbf{T}$ is optimized while the other is fixed.

In the first phase, the sub-problem
\begin{equation}
\label{eq:OptimizeD}
\mathbf{D}_P^*(\mathbf{T})=\argmin_{\mathbf{D}_P}{J(\mathbf{D}_P, \mathbf{T})}
\end{equation}
is solved for each $\mathbf{T}^k$. We employ the unconstrained QP formulation by Richter et al. \cite{Bry2015AggressiveFO}, which we briefly introduce here. The matrix $\mathbf{R}(\mathbf{T})=\mathbf{C}\tp\mathbf{H}(\mathbf{T})\mathbf{C}$ is partitioned as
\begin{equation}
\mathbf{R}(\mathbf{T})=\begin{pmatrix}\mathbf{R}_{FF}(\mathbf{T}) & \mathbf{R}_{FP}(\mathbf{T}) \\
\mathbf{R}_{PF}(\mathbf{T}) & \mathbf{R}_{PP}(\mathbf{T})\end{pmatrix}.
\end{equation}
then the solution is obtained analytically through
\begin{equation}
\mathbf{D}_P^*(\mathbf{T})=-\mathbf{R}_{PP}(\mathbf{T})^{-1}\mathbf{R}_{FP}(\mathbf{T})\mathbf{D}_F.
\end{equation}

In the second phase, the sub-problem
\begin{equation}
\label{eq:OptimizeT}
\mathbf{T}^*(\mathbf{D}_P)=\argmin_{\mathbf{T}}{J(\mathbf{D}_P, \mathbf{T})}
\end{equation}
is solved for each $\mathbf{D}_P^k$. In this phase, the scale of sub-problem can be reduced into each segment.
Due to our representation of trajectory, once $\mathbf{D}_P$ is fixed, the boundary conditions $\mathbf{D}$ isolate each entry in $\mathbf{T}$ from the others. Therefore, $T_m$ can be optimized individually to get all entries of $\mathbf{T}^*(\mathbf{D}_P)$. As for the $m$-th segment, its cost $J_m$ in (\ref{eq:KthSegmentObjective}) is indeed a rational function of $T_m$. We show the structure of $J_m$ and omit the trivial deduction for brevity:
\begin{equation}
\label{eq:RationalCostOnSegment}
J_m(T)=\rho T+\frac{1}{T^{p_{n}}}\sum\limits_{i=0}^{p_{d}}{\alpha_i T^i}
\end{equation}
where $p_{n}=2D_{max}-1$ and $p_{d}=2(D_{max}-D_{min})+N-1$ are orders of numerator and denominator respectively. The coefficient $\alpha_i$ is determined by $\mathbf{d}_m$. It is clear that $J_m(T)$ is smooth on $T\in\rbrac{0, +\infty}$. Due to the positiveness of $J_m(T)$, we have $J_m(T)\rightarrow+\infty$ as $T\rightarrow+\infty$ or $T\rightarrow0^+$. Therefore, the minimizer exists for
\begin{equation}
T_m^*(\mathbf{D}_P)=\argmin_{T\in\rbrac{0, +\infty}}{J_m(T)}.
\end{equation}
To find all candidates, we compute the derivative of (\ref{eq:RationalCostOnSegment}):
\begin{equation}
\frac{\mathrm{d}J_m(T)}{\mathrm{d}T}=\rho+\frac{1}{T^{1+p_{n}}}\sum\limits_{i=0}^{p_{d}}{(i-p_n)\alpha_i T^i}.
\end{equation}
The minimum exists in the solution set of ${\mathrm{d}J_m(T)}/{\mathrm{d}T}=0$, which can be calculated through any modern univariate polynomial real-roots solver \cite{Sagraloff2013ComputingRR}. The second phase is completed by updating every entry $T_m^*(\mathbf{D}_P)$ in $\mathbf{T}^*(\mathbf{D}_P)$.

\section{Convergence Analysis}
We first explore some basic properties of $J(\mathbf{D}_P, \mathbf{T})$, which help a lot in convergence analysis of Algorithm~\ref{alg:UnconstrainedSpatialTemporalAM}. We have already shown that $J(\mathbf{D}_P, \mathbf{T})$ are rational function of each entry in $\mathbf{T}$. As for the $\mathbf{D}_P$ part, it is indeed partially convex, which is given by the following lemma.
\begin{lemma}
    \label{lm:PartialConvexity}
    $J(\mathbf{D}_P, \mathbf{T})$ is convex in $\mathbf{D}_P$ for any $\mathbf{T}\in\mathbb{R}_+^M$, provided that Assumption~\ref{asm:LegalityCondition} holds.
\end{lemma}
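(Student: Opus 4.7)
The plan is to exploit the quadratic structure of $J(\mathbf{D}_P, \mathbf{T})$ in the variable $\mathbf{D}_P$ once $\mathbf{T}$ is fixed, and then reduce convexity to positive semidefiniteness of the relevant block $\mathbf{R}_{PP}(\mathbf{T})$.

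First I would fix $\mathbf{T}\in\mathbb{R}_+^M$ so that the term $\rho\Norm{\mathbf{T}}_1$ is a constant and can be discarded. Expanding the trace in (\ref{eq:ObjectiveStructure}) using the partition of $\mathbf{R}(\mathbf{T})=\mathbf{C}\tp\mathbf{H}(\mathbf{T})\mathbf{C}$ into the blocks $\mathbf{R}_{FF}, \mathbf{R}_{FP}, \mathbf{R}_{PF}, \mathbf{R}_{PP}$, the $\mathbf{D}_P$-dependent part of $J$ takes the form of a constant, plus a linear-in-$\mathbf{D}_P$ cross term $2\trace\cBrac{\mathbf{D}_F\tp\mathbf{R}_{FP}(\mathbf{T})\mathbf{D}_P}$, plus the quadratic term $\trace\cBrac{\mathbf{D}_P\tp\mathbf{R}_{PP}(\mathbf{T})\mathbf{D}_P}$. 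Since the first two pieces are affine in $\mathbf{D}_P$ and therefore convex, it suffices to show that the pure quadratic piece is convex, i.e.\ that $\mathbf{R}_{PP}(\mathbf{T})\succeq 0$.

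Next I would establish PSD of $\mathbf{R}_{PP}(\mathbf{T})$ by climbing back up the definitions. By construction in (\ref{eq:KthSegmentObjective}), the matrix $\mathbf{Q}(T_m)$ is the Gram matrix representing the nonnegative integral $\int_0^{T_m}\sum_{i=D_{min}}^{D_{max}}w_i\Norm{p^{(i)}(t)}^2\mathrm{d}t$ as a quadratic form in the polynomial coefficients $\mathbf{c}$, so $\mathbf{Q}(T_m)\succeq 0$. Because $\mathbf{A}(T_m)$ is invertible for $T_m>0$, the congruence $\mathbf{A}(T_m)\invtp\mathbf{Q}(T_m)\mathbf{A}(T_m)^{-1}$ is also PSD, and hence the block-diagonal $\mathbf{H}(\mathbf{T})=\bigoplus_m\mathbf{A}(T_m)\invtp\mathbf{Q}(T_m)\mathbf{A}(T_m)^{-1}$ is PSD. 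The permutation $\mathbf{C}$ is an orthogonal congruence, so $\mathbf{R}(\mathbf{T})=\mathbf{C}\tp\mathbf{H}(\mathbf{T})\mathbf{C}\succeq 0$, and any principal submatrix of a PSD matrix is PSD, giving $\mathbf{R}_{PP}(\mathbf{T})\succeq 0$. This immediately yields convexity in $\mathbf{D}_P$.

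The mild obstacle I anticipate is bookkeeping rather than conceptual: namely, carefully justifying that the trace quadratic $\trace\cBrac{\mathbf{D}_P\tp\mathbf{R}_{PP}(\mathbf{T})\mathbf{D}_P}$ really is a convex function of the matrix $\mathbf{D}_P$ (here $\mathbf{D}_P$ has three columns, one per spatial dimension) and not just of a single vector. This is handled by observing that the trace decouples column-wise, so the Hessian with respect to $\vect(\mathbf{D}_P)$ is the block-diagonal matrix $\mathbf{I}_3\otimes(2\mathbf{R}_{PP}(\mathbf{T}))$, which inherits PSD from $\mathbf{R}_{PP}(\mathbf{T})$. Assumption~\ref{asm:LegalityCondition} is invoked only to guarantee that the whole construction is meaningful (no degenerate zero durations, finite sublevel sets), so that the quadratic form is well defined on the domain under consideration.
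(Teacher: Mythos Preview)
Your argument is correct and follows essentially the same route as the paper: fix $\mathbf{T}$, read off the quadratic structure, compute the Hessian in $\mathbf{D}_P$ as a block-diagonal (equivalently $\mathbf{I}_3\otimes 2\mathbf{R}_{PP}(\mathbf{T})$), and conclude by noting that $\mathbf{R}_{PP}(\mathbf{T})$ is a principal submatrix of the PSD matrix $\mathbf{R}(\mathbf{T})$.

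The one point to tighten is your last sentence about the role of Assumption~\ref{asm:LegalityCondition}. It is not invoked ``only'' for domain housekeeping; it is precisely what forces $w_i\geq 0$ (and $\rho>0$), because otherwise the sublevel sets would be unbounded. That nonnegativity of the weights is exactly what makes your ``nonnegative integral'' assertion true and hence $\mathbf{Q}(T_m)\succeq 0$. The paper makes this explicit: it first extracts $\rho>0$ and $w_i\geq 0$ from Assumption~\ref{asm:LegalityCondition}, then uses nonnegativity of $J$ to get $\mathbf{R}(\mathbf{T})\succeq 0$. So Assumption~\ref{asm:LegalityCondition} is doing real work in the PSD step of your argument as well, not merely ensuring the quadratic form is well defined.
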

\begin{proof}
    Assumption~\ref{asm:LegalityCondition} implies that $\rho>{0}$, $w_i\geq{0}$ holds for all $D_{min}\leq{i}\leq{D_{max}}$ and at least one $w_i$ is nonzero. Otherwise, the boundness on $\mathit{L}_{\alpha}^{-}(J)$ or positiveness on its time allocation is violated. Thus, for any $\mathbf{T}\in\mathbb{R}_+^M$, the objective function is always positive, which can be seen from (\ref{eq:IntegralFormObjective}). The non-negativity of $J(\mathbf{D}_P, \mathbf{T})$ implies the positive semidefiniteness of the symmetric matrix $\mathbf{R}(\mathbf{T})$. Since $\mathbf{R}_{PP}(\mathbf{T})$ is the principal submatrix of $\mathbf{R}(\mathbf{T})$, it is also positive semidefinite. We compute the Hessian matrix of $J(\mathbf{D}_P, \mathbf{T})$ with respect to $\mathbf{D}_P$:
    \begin{equation}
    \hessian_{\mathbf{D}_P}{J(\mathbf{D}_P, \mathbf{T})}=2\bigoplus_{k=1}^{3}{\mathbf{R}_{PP}(\mathbf{T})}
    \end{equation}
    which means $\hessian_{\mathbf{D}_P}{J(\mathbf{D}_P, \mathbf{T})}$ is positive semidefinite. Therefore, $J(\mathbf{D}_P, \mathbf{T})$ is convex in $\mathbf{D}_P$.
\end{proof}

\begin{lemma}
    \label{lm:LGradientConvexProperty}
    For any convex function $f:\mathbb{R}^{I\times{J}}\mapsto\mathbb{R}$, if the following inequality holds for any $\mathbf{X}, \mathbf{Y}\in\mathbb{R}^{I\times{J}}$
    \begin{equation}
    \Norm{\grad{f(\mathbf{X})}-\grad{f(\mathbf{Y})}}_F\leq{L\Norm{\mathbf{X}-\mathbf{Y}}_F},
    \end{equation}
    in which $L$ is a constant and $\Norm{\cdot}_F$ is Frobenius norm, then
    \begin{equation}
    f(\mathbf{X})-f(\mathbf{Y})\geq{\trace\cBrac{\grad{f(\mathbf{Y})}\tp(\mathbf{X}-\mathbf{Y})}+\frac{1}{2L}\Norm{\grad{f(\mathbf{X})}-\grad{f(\mathbf{Y})}}_F^2}
    \end{equation}
\end{lemma}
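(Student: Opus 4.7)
The plan is to use the standard auxiliary-function trick that turns the Lipschitz-gradient hypothesis into a quadratic lower bound on the function via convexity. Fix $\mathbf{Y}$ and define
\[
g(\mathbf{Z}) := f(\mathbf{Z}) - \trace\cBrac{\grad f(\mathbf{Y})\tp \mathbf{Z}}.
\]
Since $g$ differs from $f$ by an affine term, $g$ is convex, $\grad g(\mathbf{Z}) = \grad f(\mathbf{Z}) - \grad f(\mathbf{Y})$ is $L$-Lipschitz in the Frobenius norm, and $\grad g(\mathbf{Y}) = 0$, so $\mathbf{Y}$ is a global minimizer of $g$.

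Next I would invoke the descent lemma for $g$: from the $L$-Lipschitz gradient and the fundamental theorem of calculus applied along the line segment from $\mathbf{W}$ to $\mathbf{Z}$, one gets
\[
g(\mathbf{Z}) \leq g(\mathbf{W}) + \trace\cBrac{\grad g(\mathbf{W})\tp(\mathbf{Z}-\mathbf{W})} + \frac{L}{2}\Norm{\mathbf{Z}-\mathbf{W}}_F^2.
\]
I would then take $\mathbf{W}=\mathbf{X}$ and perform a one-step gradient descent, i.e.\ choose $\mathbf{Z}=\mathbf{X}-\frac{1}{L}\grad g(\mathbf{X})$, which minimizes the right-hand quadratic in $\mathbf{Z}$. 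Substituting yields
\[
g(\mathbf{Z}) \leq g(\mathbf{X}) - \frac{1}{2L}\Norm{\grad g(\mathbf{X})}_F^2.
\]

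Using $g(\mathbf{Y}) \leq g(\mathbf{Z})$ (because $\mathbf{Y}$ is a global minimizer of $g$), rearranging, and unpacking the definition of $g$ together with $\grad g(\mathbf{X}) = \grad f(\mathbf{X}) - \grad f(\mathbf{Y})$ gives exactly the claimed inequality. The only genuinely non-mechanical ingredient is the descent lemma itself; it is completely standard, but on a matrix space with the Frobenius inner product one should be a little careful to verify it via $g(\mathbf{Z})-g(\mathbf{W}) = \int_0^1 \trace\cBrac{\grad g(\mathbf{W}+s(\mathbf{Z}-\mathbf{W}))\tp(\mathbf{Z}-\mathbf{W})}\mathrm{d}s$ and then bounding the integrand using the Lipschitz hypothesis and Cauchy--Schwarz in Frobenius norm. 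Once that is in hand, the rest is the auxiliary-function reduction and a single algebraic rearrangement, so no real obstacle is expected.
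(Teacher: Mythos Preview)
Your proposal is correct and is precisely the standard argument behind Theorem~2.1.5 in Nesterov's \emph{Lectures on Convex Optimization}, which is exactly what the paper cites as its proof. Since the paper defers entirely to that reference, your auxiliary-function plus descent-lemma approach is essentially the same as the paper's intended proof, just written out explicitly.
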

\begin{proof}
    See Theorem 2.1.5 in \cite{Nesterov2018LecturesOC}.
\end{proof}

\begin{lemma}
    \label{lm:BoundedDecrease}
    Provided that Assumption~\ref{asm:LegalityCondition} is satisfied, then the following inequality holds for any $\mathbf{T}\in\mathbb{R}_+^M$ and any $\mathbf{D}_P$:
    \begin{equation}
    J(\mathbf{D}_P, \mathbf{T})-J(\mathbf{D}_P^*, \mathbf{T})\geq\frac{1}{4\sigma_P(\mathbf{T})}\Norm{\grad_{\mathbf{D}_P}J(\mathbf{D}_P, \mathbf{T})}_F^2
    \end{equation}
    where
    \begin{equation}
    \label{eq:MinimumCondition}
    \mathbf{D}_P^*=\argmin_{\mathbf{D}_P}{J(\mathbf{D}_P, \mathbf{T})},
    \end{equation}
    and $\sigma_P(\mathbf{T})$ is the largest singular value of $\mathbf{R}_{PP}(\mathbf{T})$.
\end{lemma}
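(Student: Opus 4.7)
The plan is to recognize this as a standard consequence of Lemma~\ref{lm:LGradientConvexProperty} applied at the minimizer, where the only input needed is a Lipschitz constant for $\grad_{\mathbf{D}_P}J(\cdot,\mathbf{T})$ in Frobenius norm. Since $J$ is quadratic in $\mathbf{D}_P$ by the structure in (\ref{eq:ObjectiveStructure}) and convex in $\mathbf{D}_P$ by Lemma~\ref{lm:PartialConvexity}, the bound should drop out once the correct constant is identified.

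First I would note that $\grad_{\mathbf{D}_P}J(\mathbf{D}_P,\mathbf{T})$ is affine in $\mathbf{D}_P$ with Jacobian equal to the Hessian $\hessian_{\mathbf{D}_P}J=2\bigoplus_{k=1}^{3}\mathbf{R}_{PP}(\mathbf{T})$ computed inside the proof of Lemma~\ref{lm:PartialConvexity}. Because $\mathbf{R}_{PP}(\mathbf{T})$ is symmetric positive semidefinite, its operator norm equals its largest singular value $\sigma_P(\mathbf{T})$, and forming the direct sum of three identical blocks does not change this operator norm. Using compatibility of Frobenius norm with operator norm, i.e.\ $\Norm{\mathbf{A}\mathbf{X}}_F\leq\Norm{\mathbf{A}}_2\Norm{\mathbf{X}}_F$, I would conclude that
\begin{equation*}
\Norm{\grad_{\mathbf{D}_P}J(\mathbf{D}_P,\mathbf{T})-\grad_{\mathbf{D}_P}J(\mathbf{D}_P',\mathbf{T})}_F\leq 2\sigma_P(\mathbf{T})\Norm{\mathbf{D}_P-\mathbf{D}_P'}_F,
\end{equation*}
so $L=2\sigma_P(\mathbf{T})$ is a valid Lipschitz constant.

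Next I would apply Lemma~\ref{lm:LGradientConvexProperty} with $\mathbf{X}=\mathbf{D}_P$ and $\mathbf{Y}=\mathbf{D}_P^*$. The optimality condition (\ref{eq:MinimumCondition}) together with convexity forces $\grad_{\mathbf{D}_P}J(\mathbf{D}_P^*,\mathbf{T})=\mathbf{0}$, so the trace term vanishes and the inequality collapses to
\begin{equation*}
J(\mathbf{D}_P,\mathbf{T})-J(\mathbf{D}_P^*,\mathbf{T})\geq\frac{1}{2L}\Norm{\grad_{\mathbf{D}_P}J(\mathbf{D}_P,\mathbf{T})}_F^2=\frac{1}{4\sigma_P(\mathbf{T})}\Norm{\grad_{\mathbf{D}_P}J(\mathbf{D}_P,\mathbf{T})}_F^2,
\end{equation*}
which is exactly the claim.

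The only non-routine step is the Lipschitz bookkeeping for a matrix-valued variable: I must be careful that the constant coming out of the Hessian is its \emph{operator} norm (not its Frobenius norm), and that forming $\bigoplus_{k=1}^{3}\mathbf{R}_{PP}(\mathbf{T})$ does not inflate this norm beyond a single block. Both points are elementary but deserve an explicit line so that the coefficient $2\sigma_P(\mathbf{T})$, and hence the $1/(4\sigma_P(\mathbf{T}))$ on the right-hand side, is unambiguous.
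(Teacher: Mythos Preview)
Your proposal is correct and follows essentially the same route as the paper: establish that $\grad_{\mathbf{D}_P}J(\cdot,\mathbf{T})$ is Lipschitz with constant $L=2\sigma_P(\mathbf{T})$ via the spectral/operator norm bound $\Norm{\mathbf{R}_{PP}(\mathbf{T})\mathbf{X}}_F\leq\sigma_P(\mathbf{T})\Norm{\mathbf{X}}_F$, then invoke Lemma~\ref{lm:LGradientConvexProperty} together with Lemma~\ref{lm:PartialConvexity} and the first-order optimality condition $\grad_{\mathbf{D}_P}J(\mathbf{D}_P^*,\mathbf{T})=\mathbf{0}$. The only cosmetic difference is that the paper bounds the gradient difference directly as $2\Norm{\mathbf{R}_{PP}^{\mathrm{T}}(\mathbf{T})(\mathbf{D}_P-\mathbf{D}_P^*)}_F$ on the matrix variable, whereas you phrase it through the block-diagonal Hessian $2\bigoplus_{k=1}^{3}\mathbf{R}_{PP}(\mathbf{T})$ and note the direct sum does not enlarge the operator norm; both yield the same constant.
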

\begin{proof}
    The gradient of $J(\mathbf{D}_{P}, \mathbf{T})$ with respect to $\mathbf{D}_P$ can be calculated as
    \begin{equation}
    \grad_{\mathbf{D}_P}J(\mathbf{D}_{P}, \mathbf{T})=2\mathbf{R}_{FP}\tp(\mathbf{T})\mathbf{D}_F+2\mathbf{R}_{PP}\tp(\mathbf{T})\mathbf{D}_P.
    \end{equation}
    The difference in gradient at $\mathbf{D}_P$ and $\mathbf{D}_P^*$ is
    \begin{equation}
    \label{eq:PartialGradientDifference}
    \Norm{\grad_{\mathbf{D}_P}J(\mathbf{D}_P, \mathbf{T})-\grad_{\mathbf{D}_P}J(\mathbf{D}_P^*, \mathbf{T})}_F=2\Norm{\mathbf{R}_{PP}\tp(\mathbf{T})\rBrac{\mathbf{D}_P-\mathbf{D}_P^*}}_F
    \end{equation}
    Assumption~\ref{asm:LegalityCondition} ensures that $\mathbf{R}_{PP}(\mathbf{T})$ is nonzero matrix, which means it has largest singular value $\sigma_P(\mathbf{T}) > 0$ for any $\mathbf{T}\in\mathbb{R}_+^M$. According to the basic property of spectral norm, we have
    \begin{equation}
    \label{eq:SpectralNormIneqaulity}
    \frac{\Norm{\mathbf{R}_{PP}\tp(\mathbf{T})\rBrac{\mathbf{D}_P-\mathbf{D}_P^*}}_F}{\Norm{\mathbf{D}_P-\mathbf{D}_P^*}_F}\leq\max_{\Norm{\mathbf{X}}_F=1}\Norm{\mathbf{R}_{PP}\tp(\mathbf{T})\mathbf{X}}_F=\sigma_P(\mathbf{T}).
    \end{equation}
    Combining (\ref{eq:PartialGradientDifference}) and (\ref{eq:SpectralNormIneqaulity}), we get
    \begin{equation}
    \Norm{\grad_{\mathbf{D}_P}J(\mathbf{D}_P, \mathbf{T})-\grad_{\mathbf{D}_P}J(\mathbf{D}_P^*, \mathbf{T})}_F\leq{2\sigma_P(\mathbf{T})}\Norm{\mathbf{D}_P-\mathbf{D}_P^*}_F.
    \end{equation}
    According to Lemma \ref{lm:PartialConvexity} and Lemma \ref{lm:LGradientConvexProperty}, if we substitute $f(\cdot)$ by $J(\cdot, \mathbf{T})$, together with the fact that (\ref{eq:MinimumCondition}) implies $\grad_{\mathbf{D}_P}J(\mathbf{D}_P^*, \mathbf{T})=\mathbf{0}$, the result follows.
\end{proof}

\begin{theorem}
    \label{thm:UnconstrainedGlobalConvergence}
    Consider the process in Algorithm~\ref{alg:UnconstrainedSpatialTemporalAM} started with any $\mathbf{D}_P^0$. Provided that Assumption~\ref{asm:LegalityCondition} is satisfied, then the inequality always holds for $K$-th iteration:
    \[
    \min_{0\leq{k}\leq{K}}{\norm{\grad{J(\mathbf{D}_P^k, \mathbf{T}^k)}}}_F^2\leq{M_c \frac{J(\mathbf{D}_P^0, \mathbf{T}^0)-J_c}{K}}
    \]
    where $M_c$ and $J_c$ are both constant.
\end{theorem}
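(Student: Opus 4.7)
The plan is to run a standard ``sufficient decrease plus telescoping'' argument tailored to the alternating minimization structure, leveraging Lemma~\ref{lm:BoundedDecrease} to quantify the per-iteration drop and Assumption~\ref{asm:LegalityCondition} to harvest the required constants. First I would observe that because both phases in Algorithm~\ref{alg:UnconstrainedSpatialTemporalAM} are exact minimizations, the iterates satisfy $J(\mathbf{D}_P^{k+1}, \mathbf{T}^{k+1}) \leq J(\mathbf{D}_P^{k+1}, \mathbf{T}^k) \leq J(\mathbf{D}_P^k, \mathbf{T}^k)$, so the entire sequence $\{(\mathbf{D}_P^k, \mathbf{T}^k)\}$ remains inside the sublevel set $\mathit{L}_{J(\mathbf{D}_P^0, \mathbf{T}^0)}^-(J)$.

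Next I would extract the two constants in the conclusion from this sublevel set. By Assumption~\ref{asm:LegalityCondition} the set is bounded and strictly interior to $\{\mathbf{T}\in\mathbb{R}_+^M\}$; combined with the coercive blow-up of the segment cost $J_m(T)$ as $T\to0^+$ noted after (\ref{eq:RationalCostOnSegment}), the sublevel set is in fact compact. Compactness lets me define $J_c$ as the infimum of $J$ on the set (finite and attained) and $\sigma_{\max}$ as the maximum of $\sigma_P(\mathbf{T})$ on the set, which is finite since $\mathbf{R}_{PP}(\mathbf{T})$ depends continuously on $\mathbf{T}$. I would then set $M_c := 4\sigma_{\max}$.

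The core step is the per-iteration decrease. At iteration $k$, Lemma~\ref{lm:BoundedDecrease} applied with the fixed $\mathbf{T}^k$ and the minimizer $\mathbf{D}_P^{k+1}$ gives
\begin{equation}
J(\mathbf{D}_P^k, \mathbf{T}^k) - J(\mathbf{D}_P^{k+1}, \mathbf{T}^k) \geq \frac{1}{4\sigma_P(\mathbf{T}^k)}\Norm{\grad_{\mathbf{D}_P} J(\mathbf{D}_P^k, \mathbf{T}^k)}_F^2 \geq \frac{1}{M_c}\Norm{\grad_{\mathbf{D}_P} J(\mathbf{D}_P^k, \mathbf{T}^k)}_F^2,
\end{equation}
and then $J(\mathbf{D}_P^{k+1}, \mathbf{T}^{k+1}) \leq J(\mathbf{D}_P^{k+1}, \mathbf{T}^k)$ from the $\mathbf{T}$-phase. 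Moreover, since $\mathbf{T}^k$ is an interior unconstrained minimizer of $J(\mathbf{D}_P^k, \cdot)$, we have $\grad_{\mathbf{T}} J(\mathbf{D}_P^k, \mathbf{T}^k) = \mathbf{0}$, so $\Norm{\grad J(\mathbf{D}_P^k, \mathbf{T}^k)}_F^2 = \Norm{\grad_{\mathbf{D}_P} J(\mathbf{D}_P^k, \mathbf{T}^k)}_F^2$. Telescoping the decreases from $k=0$ to $k=K-1$ and using $J(\mathbf{D}_P^K, \mathbf{T}^K) \geq J_c$ yields
\begin{equation}
J(\mathbf{D}_P^0, \mathbf{T}^0) - J_c \geq \frac{K}{M_c}\min_{0 \leq k \leq K}\Norm{\grad J(\mathbf{D}_P^k, \mathbf{T}^k)}_F^2,
\end{equation}
which rearranges to the claimed inequality.

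The main obstacle I anticipate is making the compactness step airtight: Assumption~\ref{asm:LegalityCondition} only asserts boundedness of the sublevel set, so I must separately rule out accumulation on the boundary $\{T_m=0\}$ by invoking the $T\to0^+$ blow-up of $J_m$. Once that is in place, continuity of $\mathbf{R}_{PP}(\mathbf{T})$ immediately delivers a finite $\sigma_{\max}$, and everything else reduces to the telescoping computation above.
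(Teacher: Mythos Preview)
Your proposal is correct and follows essentially the same route as the paper: monotone decrease keeps the iterates in the initial sublevel set, Assumption~\ref{asm:LegalityCondition} yields a uniform bound $M_c$ on $4\sigma_P(\mathbf{T}^k)$, Lemma~\ref{lm:BoundedDecrease} together with $\grad_{\mathbf{T}}J(\mathbf{D}_P^k,\mathbf{T}^k)=\mathbf{0}$ gives the per-iteration sufficient decrease, and telescoping finishes. The only notable deviation is cosmetic: the paper takes $J_c$ to be the limit of the monotone, bounded-below sequence $J(\mathbf{D}_P^k,\mathbf{T}^k)$, whereas you take $J_c$ to be the (attained) infimum of $J$ over the compact sublevel set; both choices satisfy $J(\mathbf{D}_P^K,\mathbf{T}^K)\geq J_c$ and hence both deliver the stated bound.
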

\begin{proof}
    It is clear that the objective function is non-increasing in any iteration, i.e., for any $k\geq{0}$, we have
    \begin{equation}
    J(\mathbf{D}_P^k, \mathbf{T}^k)\geq{J(\mathbf{D}_P^{k+1}, \mathbf{T}^k)}\geq{J(\mathbf{D}_P^{k+1}, \mathbf{T}^{k+1})}.
    \end{equation}
    Moreover, the objective function is non-negative, which means $J(\mathbf{D}_P^k, \mathbf{T}^k)\geq{0}$ for for any $k\geq{0}$. Therefore,
    \begin{equation}
    \lim_{k\rightarrow+\infty}{J(\mathbf{D}_P^k, \mathbf{T}^k)}=J_c.
    \end{equation}
    Since $\mathbf{D}_P^{k+1}=\argmin_{\mathbf{D}_P}{J(\mathbf{D}_P, \mathbf{T}^k)}$, the following condition holds by Lemma \ref{lm:BoundedDecrease}:
    \begin{equation}
    \frac{\Norm{\grad_{\mathbf{D}_P}{J(\mathbf{D}_P^k, \mathbf{T}^k)}}_F^2}{4\sigma_{PP}(\mathbf{T}^k)}\leq{J(\mathbf{D}_P^k, \mathbf{T}^k)-J(\mathbf{D}_P^{k+1}, \mathbf{T}^k)}\leq{J(\mathbf{D}_P^k, \mathbf{T}^k)-J(\mathbf{D}_P^{k+1}, \mathbf{T}^{k+1})}.
    \end{equation}
    Notice that $\grad_{\mathbf{T}}J(\mathbf{D}_P^k, \mathbf{T}^k)=\mathbf{0}$ in each iteration, then
    \begin{equation}
    \Norm{\grad{J(\mathbf{D}_P^k, \mathbf{T}^k)}}_F=\Norm{\grad_{\mathbf{D}_P}J(\mathbf{D}_P^k, \mathbf{T}^k)}_F.
    \end{equation}
    Therefore,
    \begin{equation}
    \frac{\Norm{\grad{J(\mathbf{D}_P^k, \mathbf{T}^k)}}_F^2}{4\sigma_P(\mathbf{T}^k)}\leq{J(\mathbf{D}_P^k, \mathbf{T}^k)-J(\mathbf{D}_P^{k+1}, \mathbf{T}^{k+1})}.
    \end{equation}
    We simply let $\alpha=J(\mathbf{D}_P^0, \mathbf{T}^0)$, then $(\mathbf{D}_P^k, \mathbf{T}^k)\in\mathit{L}_{\alpha}^{-}(J)$ for all $k\geq{0}$. According to Assumption~\ref{asm:LegalityCondition}, $\mathit{L}_{\alpha}^{-}(J)$ is bounded and satisfies condition (\ref{eq:StrictPositiveness}). Then there exists positive constant $m_T$ and $M_T$ such that $\mathbf{T}^k\in[m_T,M_T]^{M}$ always holds for $k\geq{0}$. Consequently, $4\sigma_{P}(\mathbf{T}^k)$ is also upper bounded by a positive constant $M_c$. We have
    \begin{equation}
    \label{eq:LowerBoundedDecrease}
    \frac{\Norm{\grad{J(\mathbf{D}_P^k, \mathbf{T}^k)}}_F^2}{M_c}\leq\frac{\Norm{\grad{J(\mathbf{D}_P^k, \mathbf{T}^k)}}_F^2}{4\sigma_P(\mathbf{T}^k)}\leq{J(\mathbf{D}_P^k, \mathbf{T}^k)-J(\mathbf{D}_P^{k+1}, \mathbf{T}^{k+1})}.
    \end{equation}
    We sum it up for all $K$ iterations:
    \begin{equation}
    \frac{1}{M_c}\sum_{k=0}^{K}{\Norm{\grad{J(\mathbf{D}_P^k, \mathbf{T}^k)}}_F^2}\leq{J(\mathbf{D}_P^0, \mathbf{T}^0)-J(\mathbf{D}_P^{K+1}, \mathbf{T}^{K+1})}\leq{J(\mathbf{D}_P^0, \mathbf{T}^0)-J_c}
    \end{equation}
    Since the right hand side is bounded, we have
    \begin{equation}
    \lim_{K\rightarrow+\infty}{\grad{J(\mathbf{D}_P^K, \mathbf{T}^K)}}=\mathbf{0}.
    \end{equation}
    Taking the minimum of left hand side equals
    \begin{equation}
    \frac{\min\limits_{0\leq{k}\leq{K}}{\norm{\grad{J(\mathbf{D}_P^k, \mathbf{T}^k)}}}_F^2}{M_c/K}\leq{J(\mathbf{D}_P^0, \mathbf{T}^0)-J_c}.
    \end{equation}
    Rearranging gives the result.
\end{proof}

Theorem~\ref{thm:UnconstrainedGlobalConvergence} shows that, under no assumption on convexity, Algorithm~\ref{alg:UnconstrainedSpatialTemporalAM} shares the same global convergence rate $O(1/\sqrt{K})$ as that of gradient descent with the best step-size chosen in each iteration~\cite{Nesterov2018LecturesOC}. However, the best step-size is practically unavailable.
As a contrast, Algorithm~\ref{alg:UnconstrainedSpatialTemporalAM} does not involve any step-size choosing in each iteration.
Sub-problems (Eq.~\ref{eq:OptimizeD} and Eq.~\ref{eq:OptimizeT}) both can be solved exactly and efficiently due to their algebraic convenience. Therefore, Algorithm~\ref{alg:UnconstrainedSpatialTemporalAM} is faster than gradient-based methods in practice.

Although only convergence to stationary point is guaranteed, strict saddle points are theoretically and numerically unstable \cite{Lee2019FirstorderMA} for Algorithm~\ref{alg:UnconstrainedSpatialTemporalAM}, which is indeed a first-order method. Moreover, when the stationary point is a strict local minimum, we show that the convergence rate is faster than the general case in Theorem~\ref{thm:UnconstrainedGlobalConvergence}.

\begin{lemma}
    \label{lm:QuadraticRecurrence}
    Consider a positive sequence $\cBrac{\alpha_k}_{k\geq{0}}$ satisfying
    \begin{equation}
    \alpha_k-\alpha_{k+1}\geq{\gamma\alpha_k^2},~k=0,1,\cdots,
    \end{equation}
    where $\gamma>0$ is a constant. Then for all $k\geq{0}$,
    \begin{equation}
    \alpha_k\leq{\frac{1}{k\gamma+\alpha_0^{-1}}}.
    \end{equation}

\end{lemma}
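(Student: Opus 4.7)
The plan is to convert the quadratic decrease inequality into a linear lower bound on the reciprocal sequence $\beta_k := 1/\alpha_k$, and then telescope.

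First I would observe that the hypothesis $\alpha_k - \alpha_{k+1} \geq \gamma \alpha_k^2 \geq 0$ forces the sequence to be non-increasing, so $\alpha_{k+1} \leq \alpha_k$ for all $k \geq 0$. Since the sequence is positive by assumption, all terms stay in $(0, \alpha_0]$ and in particular the reciprocals $\beta_k = 1/\alpha_k$ are well defined and non-decreasing.

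Next I would divide both sides of the given inequality by the positive quantity $\alpha_k \alpha_{k+1}$ to get
\begin{equation}
\frac{1}{\alpha_{k+1}} - \frac{1}{\alpha_k} \;=\; \frac{\alpha_k - \alpha_{k+1}}{\alpha_k \alpha_{k+1}} \;\geq\; \frac{\gamma \alpha_k^2}{\alpha_k \alpha_{k+1}} \;=\; \gamma \cdot \frac{\alpha_k}{\alpha_{k+1}} \;\geq\; \gamma,
\end{equation}
where the last inequality uses the monotonicity $\alpha_k \geq \alpha_{k+1}$ established above. This is the key step: a one-sided multiplicative factor is absorbed thanks to monotonicity.

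Finally, I would telescope this bound from index $0$ to index $k$:
\begin{equation}
\frac{1}{\alpha_k} - \frac{1}{\alpha_0} \;=\; \sum_{j=0}^{k-1} \rBrac{\frac{1}{\alpha_{j+1}} - \frac{1}{\alpha_j}} \;\geq\; k\gamma,
\end{equation}
and rearrange to $1/\alpha_k \geq k\gamma + \alpha_0^{-1}$, then invert (both sides are positive) to obtain the claimed bound $\alpha_k \leq 1/(k\gamma + \alpha_0^{-1})$. The base case $k=0$ holds trivially with equality.

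There is not really a hard part here; the only subtlety is recognizing that the monotonicity of $\alpha_k$ is exactly what is needed to drop the factor $\alpha_k/\alpha_{k+1}$ without losing anything essential, so the telescoped bound is tight up to a constant. Everything else is routine algebra.
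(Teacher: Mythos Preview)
Your proof is correct and follows essentially the same route as the paper: establish $\frac{1}{\alpha_{k+1}}-\frac{1}{\alpha_k}\geq\gamma$ and telescope. The only cosmetic difference is in how that reciprocal gap is obtained---the paper bounds $\alpha_{k+1}\leq\alpha_k-\gamma\alpha_k^2$ first and computes $\frac{1}{\alpha_k-\gamma\alpha_k^2}-\frac{1}{\alpha_k}=\frac{\gamma}{1-\gamma\alpha_k}\geq\gamma$, whereas you divide through by $\alpha_k\alpha_{k+1}$ and invoke monotonicity---but the two manipulations are equivalent.
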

\begin{proof}
    Apparently,
    \begin{equation}
    \frac{1}{\alpha_{k+1}}-\frac{1}{\alpha_k}\geq\frac{1}{\alpha_k-\gamma\alpha_k^2}-\frac{1}{\alpha_k}=\frac{\gamma}{1-\gamma\alpha_k}\geq\gamma,
    \end{equation}
    hence
    \begin{equation}
    \frac{1}{\alpha_{k}}-\frac{1}{\alpha_0}\geq{k\gamma}.
    \end{equation}
    Rearranging gives the result.
\end{proof}

\begin{theorem}
    \label{thm:UnconstrainedLocalConvergence}
    Provided that Assumption~\ref{asm:LegalityCondition} is satisfied, let $(\widehat{\mathbf{D}}_P, \widehat{\mathbf{T}})$ denote any strict local minimum of $J(\mathbf{D}_P, \mathbf{T})$ to which Algorithm~\ref{alg:UnconstrainedSpatialTemporalAM} converges, then there exist $K_c\in\mathbb{Z}_+$ and $\gamma\in\mathbb{R}_+$, such that
    \begin{equation}
    J(\mathbf{D}_P^K, \mathbf{T}^K)-J^*\leq{\frac{1}{\gamma(K-K_c)+(J(\mathbf{D}_P^{K_c}, \mathbf{T}^{K_c})-J^*)^{-1}}}
    \end{equation}
    for all $K\geq{K_c}$, where $J^*=J(\widehat{\mathbf{D}}_P, \widehat{\mathbf{T}})$.
\end{theorem}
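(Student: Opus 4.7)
My plan is to apply Lemma \ref{lm:QuadraticRecurrence} to the shifted sequence $\alpha_k := J(\mathbf{D}_P^{K_c+k}, \mathbf{T}^{K_c+k}) - J^*$. Once a quadratic recurrence $\alpha_k - \alpha_{k+1} \geq \gamma \alpha_k^2$ is established for all $k \geq 0$, the lemma yields exactly the claimed bound. The decrease inequality $\alpha_k - \alpha_{k+1} \geq \Norm{\grad J(\mathbf{D}_P^{K_c+k}, \mathbf{T}^{K_c+k})}_F^2/M_c$ is already in hand from the proof of Theorem \ref{thm:UnconstrainedGlobalConvergence} (with the same $M_c$), so the task reduces to a local Łojasiewicz-type inequality $\Norm{\grad J(\mathbf{D}_P^k, \mathbf{T}^k)}_F \geq c\,(J(\mathbf{D}_P^k, \mathbf{T}^k) - J^*)$ for iterates sufficiently close to $(\widehat{\mathbf{D}}_P, \widehat{\mathbf{T}})$. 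Convergence of the algorithm to that point guarantees that a finite $K_c$ exists such that every iterate with $k \geq K_c$ lies in any prescribed neighborhood.

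Since $\grad_{\mathbf{T}} J(\mathbf{D}_P^k, \mathbf{T}^k) = \mathbf{0}$ at every iterate, the required inequality is really $\Norm{\grad_{\mathbf{D}_P} J(\mathbf{D}_P^k, \mathbf{T}^k)}_F \geq c\,(J(\mathbf{D}_P^k, \mathbf{T}^k) - J^*)$. I would split
\[
J(\mathbf{D}_P^k, \mathbf{T}^k) - J^* = \bigl[J(\mathbf{D}_P^k, \mathbf{T}^k) - J(\widehat{\mathbf{D}}_P, \mathbf{T}^k)\bigr] + \bigl[J(\widehat{\mathbf{D}}_P, \mathbf{T}^k) - J(\widehat{\mathbf{D}}_P, \widehat{\mathbf{T}})\bigr],
\]
bound the first bracket by convexity of $J(\cdot, \mathbf{T}^k)$ in $\mathbf{D}_P$ (Lemma \ref{lm:PartialConvexity}) together with Cauchy--Schwarz to get $\Norm{\grad_{\mathbf{D}_P} J(\mathbf{D}_P^k, \mathbf{T}^k)}_F \cdot \Norm{\mathbf{D}_P^k - \widehat{\mathbf{D}}_P}_F$, and bound the second bracket by a second-order Taylor expansion of the smooth map $J(\widehat{\mathbf{D}}_P, \cdot)$ at its critical point $\widehat{\mathbf{T}}$, yielding an $O(\Norm{\mathbf{T}^k - \widehat{\mathbf{T}}}_F^2)$ term (smoothness holds on the compact sublevel set, which by Assumption \ref{asm:LegalityCondition} stays away from $\mathbf{T} = \mathbf{0}$).

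To finish, both $\Norm{\mathbf{D}_P^k - \widehat{\mathbf{D}}_P}_F$ and $\Norm{\mathbf{T}^k - \widehat{\mathbf{T}}}_F$ must be controlled by $\Norm{\grad_{\mathbf{D}_P} J(\mathbf{D}_P^k, \mathbf{T}^k)}_F$. The implicit function theorem applied to the optimality condition $\grad_{\mathbf{T}} J(\mathbf{D}_P, \mathbf{T}) = \mathbf{0}$ at $(\widehat{\mathbf{D}}_P, \widehat{\mathbf{T}})$ gives $\Norm{\mathbf{T}^k - \widehat{\mathbf{T}}}_F \leq L_T \Norm{\mathbf{D}_P^k - \widehat{\mathbf{D}}_P}_F$ locally, provided the $\mathbf{T}$-block Hessian of $J$ at the minimizer is invertible, and local positive definiteness of $\mathbf{R}_{PP}(\mathbf{T})$ combined with the closed-form gradient $\grad_{\mathbf{D}_P} J = 2\mathbf{R}_{PP}\tp(\mathbf{T})(\mathbf{D}_P - \mathbf{D}_P^*(\mathbf{T}))$ gives $\Norm{\mathbf{D}_P^k - \widehat{\mathbf{D}}_P}_F \leq L_D \Norm{\grad_{\mathbf{D}_P} J(\mathbf{D}_P^k, \mathbf{T}^k)}_F$ after using $\mathbf{D}_P^*(\widehat{\mathbf{T}}) = \widehat{\mathbf{D}}_P$. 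The main obstacle is precisely this local error-bound step: it depends on nondegeneracy of the Hessian at $(\widehat{\mathbf{D}}_P, \widehat{\mathbf{T}})$, which is where the word \emph{strict} in the hypothesis does the work. Once the local bound $J(\mathbf{D}_P^k, \mathbf{T}^k) - J^* \leq C \Norm{\grad_{\mathbf{D}_P} J(\mathbf{D}_P^k, \mathbf{T}^k)}_F$ is secured for $k \geq K_c$, one obtains $\alpha_k - \alpha_{k+1} \geq (1/(C^2 M_c))\alpha_k^2$, and Lemma \ref{lm:QuadraticRecurrence} with $\gamma = 1/(C^2 M_c)$ delivers the stated bound.
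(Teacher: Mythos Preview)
Your overall architecture matches the paper's: both reduce to the quadratic recurrence of Lemma~\ref{lm:QuadraticRecurrence} applied to $\alpha_k=J(\mathbf{D}_P^{K_c+k},\mathbf{T}^{K_c+k})-J^*$, and both feed in the per-step decrease inequality~(\ref{eq:LowerBoundedDecrease}) from the proof of Theorem~\ref{thm:UnconstrainedGlobalConvergence}. The divergence is in how the local \L{}ojasiewicz-type bound $J^k-J^*\le C\,\Norm{\grad J^k}_F$ is obtained. You split $J^k-J^*$ along the two blocks, invoke only the \emph{partial} convexity in $\mathbf{D}_P$ (Lemma~\ref{lm:PartialConvexity}), and then need an implicit-function/error-bound argument to dominate $\norm{\mathbf{T}^k-\widehat{\mathbf{T}}}$ and $\norm{\mathbf{D}_P^k-\widehat{\mathbf{D}}_P}_F$ by $\Norm{\grad_{\mathbf{D}_P}J^k}_F$. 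That step can be made rigorous (e.g.\ via the reduced function $g(\mathbf{D}_P)=J(\mathbf{D}_P,\mathbf{T}^*(\mathbf{D}_P))$ and the envelope theorem, whose Hessian at $\widehat{\mathbf{D}}_P$ is the Schur complement of the full Hessian and hence positive definite), but as written your sketch of it through $\mathbf{R}_{PP}$ and $\mathbf{D}_P^*(\widehat{\mathbf{T}})=\widehat{\mathbf{D}}_P$ hides a circularity that takes extra work to untangle. The paper sidesteps all of this: a strict local minimum gives $\hessian J(\widehat{\mathbf{D}}_P,\widehat{\mathbf{T}})\succ 0$, hence $J$ is \emph{jointly} convex on a ball $\mathcal{B}(R_c)$, and the first-order convexity inequality plus Cauchy--Schwarz immediately yield $J^k-J^*\le R_c\,\Norm{\grad J^k}_F$ once the iterates enter $\mathcal{B}(R_c)$. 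No splitting, no implicit function theorem, no error bound on the iterates beyond the trivial $\norm{x^k-\widehat{x}}\le R_c$; the resulting constant is simply $\gamma=1/(M_cR_c^2)$. Your route would work and is more ``structural'', but the paper's use of full local convexity is considerably shorter.
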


\begin{proof}
    Define the neighborhood as
    \begin{equation}
    \mathcal{B}(r)=\cBrac{(\mathbf{D}_P, \mathbf{T})~\Big|~\norm{\mathbf{D}_P-\widehat{\mathbf{D}}_P}_F^2+\norm{\mathbf{T}-\widehat{\mathbf{T}}}^2\leq{r^2}}
    \end{equation}
    A strict local minimum $(\widehat{\mathbf{D}}_P, \widehat{\mathbf{T}})$ satisfies $\hessian{J(\widehat{\mathbf{D}}_P, \widehat{\mathbf{T}})}\succ\mathbf{0}$, then there exists $R_c\in\mathbb{R}_+$ such that $J$ is locally convex in the domain $\mathcal{B}(R_c)$. Moreover, there exists a positive integer $K_c$ such that $(\mathbf{D}_P^k, \mathbf{T}^k)\in\mathcal{B}(R_c)$ holds for all $k\geq{K_c}$, so we only consider $k\geq{K_c}$ hereafter. Due to the local convexity, we have
    \begin{equation}
    J(\mathbf{D}_P^k, \mathbf{T}^k)-J^*\leq\trace\cBrac{\grad_{\mathbf{D}_P}J(\mathbf{D}_P^k, \mathbf{T}^k)\tp{(\mathbf{D}_P^k-\widehat{\mathbf{D}}_P)}}+\grad_{\mathbf{T}}J(\mathbf{D}_P^k, \mathbf{T}^k)\tp(\mathbf{T}^k-\widehat{\mathbf{T}}).
    \end{equation}
    By applying Cauchy-Schwartz inequality on the right hand side, we have
    \begin{equation}
    (J(\mathbf{D}_P^k, \mathbf{T}^k)-J^*)^2\leq\Norm{\grad{J(\mathbf{D}_P^k, \mathbf{T}^k)}}_F^2\rBrac{\Norm{\mathbf{D}_P^k-\widehat{\mathbf{D}}_P}_F^2+\Norm{\mathbf{T}^k-\widehat{\mathbf{T}}}^2}.
    \end{equation}
    Notice that the distance between $(\mathbf{D}_P^k, \mathbf{T}^k)$ and the local minimum is upper-bounded by $R_c$, thus
    \begin{equation}
    \frac{(J(\mathbf{D}_P^k, \mathbf{T}^k)-J^*)^2}{R_c^2}\leq\Norm{\grad{J(\mathbf{D}_P^k, \mathbf{T}^k)}}_F^2.
    \end{equation}
    According to the inequality (\ref{eq:LowerBoundedDecrease}) deduced in the proof of Theorem \ref{thm:UnconstrainedGlobalConvergence}, we have
    \begin{equation}
    J(\mathbf{D}_P^k, \mathbf{T}^k)-J(\mathbf{D}_P^{k+1}, \mathbf{T}^{k+1})\geq\frac{\Norm{\grad{J(\mathbf{D}_P^k, \mathbf{T}^k)}}_F^2}{M_c},
    \end{equation}
    where $M_c$ is the upper bound of $4\sigma_P(\mathbf{T})$. Combining these two conditions, we get
    \begin{equation}
    J(\mathbf{D}_P^k, \mathbf{T}^k)-J(\mathbf{D}_P^{k+1}, \mathbf{T}^{k+1})\geq\frac{(J(\mathbf{D}_P^k, \mathbf{T}^k)-J^*)^2}{M_cR_c^2}.
    \end{equation}
    We apply Lemma \ref{lm:QuadraticRecurrence} by defining
    \begin{equation}
    \alpha_k:=J(\mathbf{D}_P^{k+K_c}, \mathbf{T}^{k+K_c})-J^*,~~~\gamma:=1/{M_cR_c^2},
    \end{equation}
    then the result follows.
\end{proof}

When Algorithm~\ref{alg:UnconstrainedSpatialTemporalAM} converges to a strict local minimum, the above theorem shows that the local convergence rate is $O(1/K)$. Note that it is possible to accelerate our method to attain the optimal rate $O(1/K^2)$ of first-order methods or use high-order methods to achieve a faster rate.

\bibliography{references}
\bibliographystyle{plain}

\end{document}